\theoremstyle{plain}
\newtheorem{thm}{\protect\theoremname}
\newcounter{definition}
\theoremstyle{definition}
\newtheorem{defn}[definition]{\protect\definitionname}
\newcounter{remark}
\theoremstyle{remark}
\newtheorem{rem}[remark]{\protect\remarkname}
\newcounter{proposition}
\theoremstyle{plain}
\newtheorem{prop}[proposition]{\protect\propositionname}
\theoremstyle{plain}
\providecommand{\definitionname}{Definition}
\providecommand{\propositionname}{Porposition}
\providecommand{\corollaryname}{Corollary}
\providecommand{\remarkname}{Remark}
\providecommand{\theoremname}{Theorem}
\title{
    Virtual Holonomic Constraints in Motion Planning: \\
    Revisiting Feasibility and Limitations
}
\author{
  Maksim Surov$^{1}$
  \thanks{
    $^{1}$Maksim Surov is with Department of Information Technologies and AI, 
    Sirius University of Science and Technology, 
    Sochi, Russia
    {\tt\small surov.m.o@gmail.com}
  }
}
\begin{document}
  \maketitle
  \thispagestyle{empty}
  \pagestyle{empty}

  \begin{abstract}
    This paper addresses the feasibility of virtual holonomic constraints (VHCs) in the context of motion planning for underactuated mechanical systems with a single degree of underactuation. 
    While existing literature has established a widely accepted definition of VHC,
    we argue that this definition is overly restrictive and excludes a broad class of admissible trajectories from consideration.
    To illustrate this point, we analyze a periodic motion of the Planar Vertical Take-Off and Landing (PVTOL) aircraft that satisfies all standard motion planning requirements, including orbital stabilizability. However, for this solution -- as well as for a broad class of similar ones -- there exists no VHC that satisfies the conventional definition.
    We further provide a formal proof demonstrating that the conditions imposed by this definition necessarily fail for a broad class of trajectories of mechanical systems.
    These findings call for a reconsideration of the current definition of VHCs, with the potential to significantly broaden their applicability in motion planning.
  \end{abstract}

  \section{Introduction}
    
    In this paper, we consider Euler-Lagrange systems described by the
    equation
    \begin{equation}
    \label{eq:lag-sys}
      M\left(q\right)\ddot{q}+C\left(q,\dot{q}\right)\dot{q}+G\left(q\right)=B\left(q\right)u
    \end{equation}
    where $q\in\mathbb{R}^{n}$ denotes the generalized coordinates and
    $u\in\mathbb{R}^{n-1}$ represents the control inputs. For simplicity,
    we assume the configuration space is $\mathbb{R}^{n}$, thereby avoiding
    a discussion of its topological properties. The matrix $M\left(q\right)\in\mathbb{R}^{n\times n}$
    is positive definite; $C\left(q,\dot{q}\right)\in\mathbb{R}^{n\times n}$
    is linear in $\dot{q}$; $G\left(q\right)\in\mathbb{R}^{n}$ is a
    vector; $B\left(q\right)\in\mathbb{R}^{n\times\left(n-1\right)}$
    is of rank $n-1$. All functions are assumed to be continuously differentiable.
    This system is underactuated with underactuation degree one.
    
    For system~(\ref{eq:lag-sys}), we consider the problem of motion
    planning, which involves finding smooth functions $q_{*}\left(t\right)\in C^{2}\left(\mathbb{R}\right)$
    and $u_{*}\left(t\right)\in C^{0}\left(\mathbb{R}\right)$ that satisfy
    the system dynamics. While specific applications may impose additional
    requirements on these functions, a typical criterion is the existence of stabilizing feedback
    which ensures that the planned motion can be executed on a physical system.

    The problem of motion planning has been studied in a number of publications~\cite{Shiriaev-2005-constructive-tool,Surov-2015,Freidovich-2008,Maggiore-2013,Consolini-2011}
    using the \emph{virtual holonomic constraints} (VHCs) approach.
    This method assumes that the desired trajectory $q_{*}\left(t\right)$
    satisfies a geometric constraint of the form $h\left(q_{*}\left(t\right)\right)=0$,
    where $h:\mathbb{R}^{n}\to\mathbb{R}^{n-1}$ is a smooth function
    with Jacobian $dh\left(q\right)\equiv\frac{\partial h\left(q\right)}{\partial q}$
    of rank $n-1$ (see, e.g., \cite{Maggiore-2013,Otsason-2019}). Alternatively,
    the VHC can be defined in parametric form $q_{*}\left(t\right)=\phi\left(\theta_{*}\left(t\right)\right)$,
    where $\theta$ is a scalar parameter and $\phi:\mathbb{R}\to\mathbb{R}^{n}$
    is a smooth function. As shown in~\cite{Shiriaev-2005-constructive-tool},
    the VHC framework in application to motion planning enables reduction of the original $2n$-dimensional
    dynamics~(\ref{eq:lag-sys}) to a single scalar second-order differential
    equation, known as the \emph{reduced dynamics}. A solution $\theta_{*}\left(t\right)$
    to this equation automatically defines corresponding solution
    $q_{*}\left(t\right)=\phi\left(\theta_{*}\left(t\right)\right)$ of the
    system~(\ref{eq:lag-sys}). This technique has significantly
    simplified the motion planning problem and has facilitated the solution
    of many challenging control tasks~\cite{Surov-2015,Freidovich-2008,Buss-2016}. 
    
    In~\cite{Maggiore-2013}, the authors initiated a detailed discussion
    on the definition of VHC and the corresponding necessary conditions
    on the functions $\phi\left(\cdot\right)$ and $h\left(\cdot\right)$.
    To this end, they relate the VHCs concept to the notion of \emph{controlled
    invariant manifolds}~\cite[pp.~293-294]{Isidori-1995}. For clarity
    and brevity, we present a slightly modified version of the definition, adapted from \cite[Definition 2.1]{Maggiore-2013}:
    \begin{defn}
    \label{def:VHC}
      A virtual holonomic constraint of order $n-1$ is
      a relation $h\left(q\right)=0$, where $h:\mathbb{R}^{n}\to\mathbb{R}^{n-1}$
      is smooth, $\mathrm{rank}\,dh\left(q\right)=n-1$ for all
      $q\in h^{-1}\left(0\right)$ and the set 
      \[
          \Gamma=\left\{ \left(q,\dot{q}\right):h\left(q\right)=0\quad\text{and}\quad\frac{\partial h\left(q\right)}{\partial q}\dot{q}=0\right\} 
      \]
      is \emph{controlled invariant}. That is, there exists a smooth feedback
      $u\left(q,\dot{q}\right)$ such that $\Gamma$ is positively invariant
      for the closed-loop system. The set $\Gamma$ is called the \emph{constraint
      manifold} associated with the VHC $h\left(q\right)=0$.
    \end{defn}
    Similar definitions can be found in other publications, including~\cite{Consolini-2011,Otsason-2019,Consolini-2010,Jankuloski-2012,Consolini-2018,Elobaid-2022}.
    In some of these works~\cite{Consolini-2011,Consolini-2010,Jankuloski-2012}, the term \emph{feasible} VHC is used. A common requirement across all these studies is that the entire set $\Gamma$ must be controlled invariant. Moreover, the authors emphasize that the loss of controlled invariance of $\Gamma$ implies the VHC is \emph{not feasible}~\cite[p. 85]{Jankuloski-2012}.

    By contrast, several earlier works propose a more general interpretation of VHCs -- see, for example, \cite{Shiriaev-2005-constructive-tool,Shiriaev-2008}. In particular, the definition in~\cite[p. 204]{Shiriaev-2008} does not require the controlled invariance of $\Gamma$; instead, it merely requires the existence of at least one solution along which the relation $h(q) = 0$ holds.

    The purpose of this publication is to clarify that Definition~\ref{def:VHC} characterizes only a narrow subclass of admissible VHCs and should not be regarded as a general definition.
    The main result of the paper is formulated in Theorem~\ref{thm:singularity-necessity}, which establishes that the controlled invariance of the manifold $\Gamma$ necessarily fails for a certain class of trajectories of the mechanical system. 

    We begin our exposition in Section~\ref{sec:pvtol-trajectory} with an example of a periodic trajectory of the PVTOL aircraft that satisfies typical motion planning requirements, including orbital stabilizability. For this trajectory, we demonstrate the absence of any VHC satisfying Definition~\ref{def:VHC}. However, by relaxing the requirement of controlled invariance of $\Gamma$, the VHC-based approach remains applicable for generating the trajectory and constructing a stabilizing feedback control.
    In Section~\ref{sec:non-regular-vhc}, we examine the motion planning technique proposed in~\cite{Surov-2018}, which is based on singular reduced dynamics, and show that it conflicts with the Definition~\ref{def:VHC}.
    Section~\ref{sec:pvtol-periodic-trajectories} presents a broad class of periodic trajectories for the PVTOL aircraft, all of which fall outside the scope of Definition~\ref{def:VHC}.
    Section~\ref{sec:VHC-definitions} provides a comparative analysis of various interpretations of VHCs found in the literature. Finally, Section~\ref{sec:conclusion} offers concluding remarks.

  \subsection*{Notation}
    In accordance with~\cite{Isidori-1995}, we introduce the following
    conventions:
    \begin{itemize}
      \item
        The directional derivative of a smooth function $h:\mathbb{R}^{n}\to\mathbb{R}^{k}$
        along a vector field $f\in \mathbb{R}^{n}$ is denoted by $L_{f}h\left(q\right)\equiv\frac{\partial h\left(q\right)}{\partial q}f\left(q\right)\in\mathbb{R}^{k}$.
      \item
        The higher-order directional derivatives are defined recursively as:
        $L_{f}^{l}h\left(q\right)\equiv L_{f}L_{f}^{l-1}h\left(q\right)\in\mathbb{R}^{k}$.
      \item
        The Jacobian matrix of a function $h:\mathbb{R}^{n}\to\mathbb{R}^{k}$
        is denoted by $dh\left(q\right)\equiv\frac{\partial h\left(q\right)}{\partial q}\in\mathbb{R}^{k\times n}$.
      \item 
        The column space of a matrix $A$ is denoted by $\mathrm{Im}\left[A\right]$. 
      \item
        The kernel of a matrix $A\in\mathbb{R}^{n\times m}$ is defined as: $\ker\left[ A\right]\equiv\left\{ x\in\mathbb{R}^{m}\mid Ax=0\right\}.$
    \end{itemize}

  \section{Motivating Example: Periodic Trajectory of a PVTOL Aircraft}
  \label{sec:pvtol-trajectory}

    \begin{figure}
      \begin{centering}
        \vspace{4pt}
        \includegraphics[width=8.5cm]{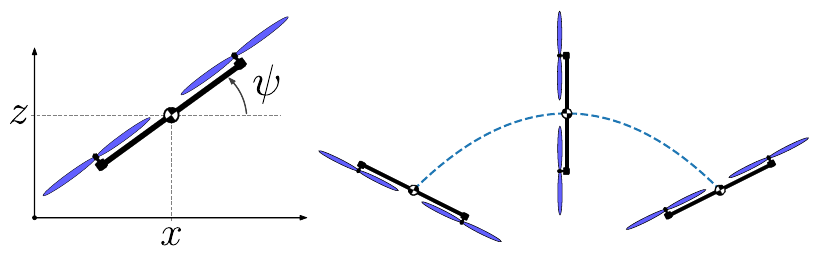}
      \par\end{centering}
      \caption{PVTOL aircraft: generalized coordinates and tic-toc maneuver schematically.}
      \label{fig:pvtol-tic-toc}
    \end{figure}

    Let us consider the PVTOL aircraft~\cite{Hauser-1992-pvtol},
    the dynamics of which are governed by the system of ordinary differential equations:
    \begin{equation}
    \label{eq:pvtol-dynamics}
      \ddot{x}=-\sin{\psi} \, u_{1},
      \quad\ddot{z}=\cos{\psi} \, u_{1} - 1,
      \quad\ddot{\psi} = u_{2},
    \end{equation}
    where $q=\left(x,z,\psi\right)^{\top}\in\mathbb{R}^{3}$ are generalized
    coordinates and $u=\left(u_{1},u_{2}\right)^{\top}\in\mathbb{R}^{2}$
    represents the control inputs, which are allowed to take both positive and negative values.
    The coordinates $x,z$ specify the position of aircraft's center of mass, while the angle $\psi$ defines its attitude, as illustrated in Fig.~\ref{fig:pvtol-tic-toc}. For simplicity of exposition, we consider a model in which the coupling factor 
    $\epsilon$ is set to zero. This assumption simplifies the subsequent expressions, yet the results can be extended to the general case with $\epsilon\ne0$ without essential modifications.
    It is straightforward to verify that the PVTOL dynamics are of the Euler-Lagrange type, with the following matrices: 
    $M = I_{3\times3}$, $C = 0_{3\times3}$, 
    \[
      G = \left(\begin{array}{c}
          0\\
          1\\
          0
      \end{array}\right)
      \quad\text{and}\quad
      B\left(q\right)=\left(\begin{array}{cc}
          -\sin\psi & 0\\
          \cos\psi & 0\\
          0 & 1
      \end{array}\right).
    \]
    In this notation, the system can be rewritten compactly as
    $$
        \ddot{q}=B\left(q\right)u-G,
    $$
    which aligns with the general structure of Euler-Lagrange systems~(\ref{eq:lag-sys}).

    It can be readily verified by direct substitution that the given pair of analytic functions
    \begin{align}
    \label{eq:ref-traj}
      q_{*}\left(t\right) & =\left(\begin{array}{c}
          \sin t\\
          -\frac{1}{2}\sin^{2}t\\
          \frac{\pi}{2}-\arctan\left(2\sin t\right)
          \end{array}\right) \in C^\omega(\mathbb{R}) \quad\text{and}\\
      u_{*}\left(t\right) & =\left(\begin{array}{c}
          \sin t\sqrt{4\sin^{2}t+1}\\
          \frac{12\sin t+2\sin3t}{\left(3-2\cos2t\right)^{2}}
          \end{array}\right)  \in C^\omega(\mathbb{R}).\nonumber 
    \end{align}
    constitutes a solution to system~(\ref{eq:pvtol-dynamics}). The trajectory $q_{*}\left(t\right)$ corresponds to the so-called \textit{tic-toc} aerobatic maneuver of the aircraft, which is schematically illustrated in Fig.~\ref{fig:pvtol-tic-toc}.

    \begin{prop}
    \label{proposition:no-vhc-for-pvtol}
      There exists no VHC satisfying Definition~\ref{def:VHC} that can be associated with trajectory~(\ref{eq:ref-traj}). In other words, there does not exist a twice continuously differentiable function $h(q) : \mathbb{R}^3 \to \mathbb{R}^2$ such that:
      \begin{enumerate}
        \item 
          $h\left(q_{*}\left(t\right)\right)\equiv0$ for all $t \in \mathbb{R}$,
        \item 
          $\mathrm{rank}\,dh\left(q\right) = 2$ for all $q \in h^{-1}\left(0\right)$,
        \item 
          the set
          \[
            \Gamma\equiv\left\{ \left(q,\dot{q}\right)\in\mathbb{R}^{6}\mid h\left(q\right)=0
             \,\,\text{and}\,\, L_{\dot{q}}h\left(q\right)=0\right\} 
          \]
          is a controlled invariant manifold.
      \end{enumerate}
    \end{prop}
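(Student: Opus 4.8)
The plan is to localise the failure of controlled invariance at a single configuration visited by the trajectory, and to show that the obstruction there is forced by the geometry of $q_*$ alone, hence independent of the particular $h$ chosen.

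First I would pin down the geometry of any admissible $h$. The first condition forces the image of $q_*$ to lie in $h^{-1}(0)$, while the rank-$2$ condition makes $h^{-1}(0)$ a one-dimensional embedded submanifold. Since the image of $q_*$ is the regular arc $\{(\xi,-\tfrac12\xi^2,\tfrac\pi2-\arctan 2\xi):\xi\in[-1,1]\}$ (regular because $dq/d\xi\neq0$), near any interior point it must coincide with $h^{-1}(0)$, so the tangent line to $h^{-1}(0)$ at such a point is determined by the trajectory alone. I would single out $q_0=q_*(0)=(0,0,\tfrac\pi2)$, where the tangent direction is $T=(1,0,-2)$. The key algebraic fact is $T\in\operatorname{Im}B(q_0)$; equivalently, writing $B^\perp(q)=(\cos\psi,\sin\psi,0)$ for the left annihilator $B^\perp B=0$, one has $B^\perp(q_0)T=0$. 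Consequently $\ker dh(q_0)=\operatorname{span}(T)$, the rows of $dh(q_0)$ annihilate $T$, and the decoupling matrix $dh(q_0)B(q_0)\in\mathbb{R}^{2\times2}$ is singular.

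Next I would translate controlled invariance of $\Gamma$ into an algebraic solvability condition. Differentiating $dh(q)\dot q=0$ along $\ddot q=B(q)u-G$ (legitimate since $h\in C^2$) gives, on $\Gamma$, the requirement
\[
  dh(q)B(q)\,u=-\dot q^\top D^2h(q)\,\dot q+dh(q)G,
\]
where $D^2h$ is the stack of component Hessians, so that $\dot q^\top D^2h\,\dot q\in\mathbb{R}^2$. Controlled invariance forces this system to be solvable for $u$ at \emph{every} point of $\Gamma$. Over $q_0$ the fiber of $\Gamma$ is the entire line $\{(q_0,vT):v\in\mathbb{R}\}$, because $\ker dh(q_0)=\operatorname{span}(T)$. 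As $dh(q_0)B(q_0)$ is singular, there is a nonzero covector $c$ with $c^\top dh(q_0)B(q_0)=0$; since the rows of $dh(q_0)$ are independent, this forces $c^\top dh(q_0)=\mu\,B^\perp(q_0)$ with $\mu\neq0$. Left-multiplying the solvability condition by $c^\top$ and evaluating at $(q_0,vT)$ annihilates the $u$-term and yields the scalar constraint
\[
  v^2\,T^\top\mathcal{H}\,T=\mu\,B^\perp(q_0)G,
\]
where $\mathcal H$ is the Hessian of $c^\top h$ at $q_0$. Exploiting $B^\perp(q_0)G=\sin\tfrac\pi2=1\neq0$, the right-hand side is a fixed nonzero number, so this equation admits at most two values of $v$ (namely $\pm v_0$ if $T^\top\mathcal H T\neq0$, and none if $T^\top\mathcal H T=0$). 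Since the fiber of $\Gamma$ over $q_0$ contains every $v\in\mathbb{R}$, almost all of these states violate solvability; no feedback can render $\Gamma$ invariant, contradicting the controlled-invariance requirement.

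I expect the main obstacle to be establishing robustness over all admissible $h$ rather than the obvious choice $h=(z+\tfrac12x^2,\ \psi-\tfrac\pi2+\arctan 2x)$. Different $h$ produce different second-order data, so the argument must be phrased purely in terms of trajectory-determined quantities: the tangent line $T$ at $q_0$, the inclusion $T\in\operatorname{Im}B(q_0)$, and the value $B^\perp(q_0)G$. The computation above isolates exactly these, with $\mathcal H$ entering only through whether $v$ is pinned to two values or to none -- either way defeating invariance. A secondary check is confirming that $q_0$ is a genuine interior (regular) point of the arc, which holds because $t=0$ is not a turning point of $q_*$, so the tangent-line identification $\ker dh(q_0)=\operatorname{span}(T)$ is valid.
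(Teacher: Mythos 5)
Your proof is correct and takes essentially the same route as the paper's: both localize the obstruction at $q_0=q_*(0)$, observe that $\dot q_*(0)\in\mathrm{Im}\left[B(q_0)\right]$ together with $dh(q_0)\dot q_*(0)=0$ forces the decoupling matrix $dh(q_0)B(q_0)$ to be singular, apply a left annihilator to the solvability condition, and derive the contradiction from the fact that the gravity term is a fixed nonzero constant (since $G\notin\mathrm{Im}\left[B(q_0)\right]$, equivalently $B_{\perp}(q_0)G=1\neq0$) while the Hessian term is quadratic in velocity, so the zero-velocity state in the fiber of $\Gamma$ violates the condition. Your identification $c^{\top}dh(q_0)=\mu B_{\perp}(q_0)$ is merely a more explicit rendering of the paper's orthogonal-complement argument for why $Ndh(q_0)G\neq0$.
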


    \begin{proof}
    \label{proof:proposition}
      Suppose, for the sake of contradiction, that a function $h(q)$ satisfying the conditions above exists. Then, as discussed in~\cite{Consolini-2018}, the set $\Gamma$ is a two-dimensional manifold, which represents the tangent bundle $\Gamma=T\mathcal{C}$,
      where the base space $\mathcal{C}$ is defined as 
      $$
        \mathcal{C}=\left\{ q\in\mathbb{R}^{3}\mid\exists\,t:\, q=q_{*}\left(t\right)\right\}.
      $$
      To establish the controlled invariance of $\Gamma$, we analyze the dynamics of the variable $y=h\left(q\right)$, given by 
      \begin{align}
      \label{eq:ddot_y}
        \dot{y} &= L_{\dot{q}}h\left(q\right)\nonumber \\
        \ddot{y} &= dh\left(q\right)\left(B\left(q\right)u-G\right)+L_{\dot{q}}^{2}h\left(q\right).
      \end{align}
      According to the definition of a locally controlled invariant manifold in~\cite[pp.~293-294]{Isidori-1995}, the manifold $\Gamma$ is controlled invariant if and only if, for every $\left(q, \dot{q}\right) \in \Gamma$, there exists a smooth mapping $u : \Gamma \to \mathbb{R}^2$ such that the right-hand side of equation~(\ref{eq:ddot_y}) is identically zero. This condition implies that the following system of linear algebraic equations in $u$ must be solvable for all $\left(q, \dot{q}\right) \in \Gamma$.
      \begin{equation}
      \label{eq:linsys-for-u}
        dh\left(q\right)B\left(q\right)u=dh\left(q\right)G-L_{\dot{q}}^{2}h\left(q\right).
      \end{equation}
      Here, two options are possible:
      \begin{itemize}
        \item 
          The matrix $dh\left(q\right)B\left(q\right) \in \mathbb{R}^{(n-1)\times (n-1)}$ is of full rank. In
          this case, $\Gamma$ is controlled invariant for any bounded right-hand side.
        \item 
          If the matrix $dh\left(q\right)B\left(q\right)$ is not of full rank
          at some $q_{0}\in\mathcal{C}$, then there exists a nonzero row
          vector $N$ such that $N dh\left(q\right)B\left(q\right)=0$. In this case, the necessary condition for controlled invariance becomes
          \begin{equation}
          \label{eq:dq_constr}
              Ndh\left(q_{0}\right)G-NL_{\dot{q}}^{2}h\left(q_{0}\right)=0\quad\text{for all}\quad\dot{q}\in T_{q_{0}}\mathcal{C}.
          \end{equation}
          Importantly, the first term $Ndh\left(q_{0}\right)G$ does not depend
          on $\dot{q}$, while the second term is a quadratic form in $\dot{q}$. This implies that the only way to satisfy condition~(\ref{eq:dq_constr})  is for both terms to vanish: $Ndh\left(q_{0}\right)G=0$ and $NL_{\dot{q}}^{2}h\left(q_{0}\right)=0$
          for all $\dot{q}\in T_{q_{0}}\mathcal{C}$.
      \end{itemize}
      Let us consider the point $q_{0}\in\mathcal{C}$ given by:
      $$
        q_{0}=q_{*}\left(0\right)=\left(0,0,\frac{\pi}{2}\right)^{\top}.
      $$
      At this point, the velocity vector $\dot{q}_{0}=\dot{q}_{*}\left(0\right)=\left(1,0,-2\right)^{\top}\in T_{q_{0}}\mathcal{C}$
      lies in the image of the matrix
      \begin{align*}
        B\left(q_{0}\right) & =\left(\begin{array}{cc}
        -1 & 0\\
        0 & 0\\
        0 & 1
        \end{array}\right), \quad \text{that is,} \quad
        \dot{q}_{0}\in\mathrm{Im}\left[B\left(q_{0}\right)\right] .
      \end{align*}
      The condition $h\left(q_{*}\left(t\right)\right)\equiv0$ implies
      that the rows of $dh\left(q_{0}\right)$ are orthogonal to $\dot{q}_{0}$:
      $dh\left(q_{0}\right)\dot{q}_{0}=0$. 
      Consequently, the matrix $dh\left(q_{0}\right)B\left(q_{0}\right)$ cannot have full rank, hence 
      it possesses a left annihilator $N\in\ker\left[dh\left(q_{0}\right)B\left(q_{0}\right)\right]^{\top}\setminus\left\{ 0\right\}$.
      Then, in
      order for $\Gamma$ to be controlled invariant, the condition~(\ref{eq:dq_constr})
      must be satisfied. Since the matrix $dh\left(q\right)$ consists of two linearly independent rows, the row vector $Ndh\left(q_{0}\right)\in\mathbb{R}^{1\times3}$
      is nonzero. Furthermore, the orthogonal complement of this row vector is a two-dimensional subspace that coincides with $\mathrm{Im}\left[B\left(q_{0}\right)\right]$.
      Observing that the vector $G$ does not lie in 
      $\mathrm{Im}\left[B\left(q_{0}\right)\right]$,
      we conclude that $Ndh\left(q_{0}\right)G$ is nonzero. Therefore, the equation~(\ref{eq:dq_constr}) can only hold for specific values of
      $\dot{q}$, namely $\dot{q}=\pm\dot{q}_{0}$. However, if we consider $\dot{q}=0\in T_{q_{0}}\mathcal{C}$, the second term of~(\ref{eq:dq_constr}) vanishes, causing the equality to be violated.
      This contradicts the requirement that~\eqref{eq:dq_constr} must hold for all $\dot{q} \in T_{q_{0}} \mathcal{C}$.
      Therefore, we conclude that $\Gamma$ is not a controlled invariant manifold.
    \end{proof}

    This general result can also be verified by the reader through 
    direct computation using a specific candidate for the VHC, such as
    \begin{equation}
    \label{eq:implicit-vhc-candidate}
      h\left(q\right)=\left(\begin{array}{c}
        z+\frac{1}{2}x^{2}\\
        \psi-\frac{\pi}{2}+\arctan\left(2x\right)
        \end{array}\right)
    \end{equation}
    or any other smooth function $h\left(q\right)$ that satisfies requirements 1) and 2) of Proposition~\ref{proposition:no-vhc-for-pvtol}.

    The arguments above naturally raise the question of the practical realizability of such a trajectory in a physical system, particularly in the presence of model uncertainties. One might question whether the strong requirement of passing the system through the given configuration $q_{0}$ with a precisely specific velocity is feasible, as it may appear unrealistic in practice.
    However, this requirement is a direct consequence of imposing a VHC that constrains the system to move along a prescribed direction. In real-world scenarios, slight deviations from the VHC allow the system to traverse this configuration with a range of possible velocities. Below we demonstrate, that despite the strict requirement for the velocity, the trajectory can still be stabilized with an appropriate feedback controller.

  \subsection{Stabilizability of Periodic Trajectories}
  \label{sec:pendubot-trajectory-stabilizability}
    First of all we notice, that the existence of orbitally stabilizing feedback follows directly from Theorem~4 in~\cite{Nam-1992}. This theorem states that any pair of points in a neighborhood of a closed orbit can be connected by a trajectory generated by a piecewise constant control input, provided that the Lie algebra $\mathrm{Lie}\left\{ f, g_1, g_2, \dots, g_{n-1} \right\}$ spans the entire state space \emph{at least at one point} along the orbit.
    For system~(\ref{eq:pvtol-dynamics}), the relevant vector fields are given by:
    \[
      f := \left(\begin{array}{c}
        \dot{q}\\
        -G
        \end{array}\right),\,
      g_{1} := \left(\begin{array}{c}
        0_{3\times1}\\
        \mathrm{col}_{1}B\left(q\right)
        \end{array}\right),\,
      g_{2} := \left(\begin{array}{c}
        0_{3\times1}\\
        \mathrm{col}_{2}B\left(q\right)
        \end{array}\right).
    \]
    To verify the accessibility condition, we compute the determinant of the matrix
    \[
      \Psi(q,\dot q) \equiv \left[f,g_{1},g_{2},\mathrm{ad}_{f}g_{1},\mathrm{ad}_{f}g_{2},\mathrm{ad}_{f}^{2}g_{1}\right]
    \]
    which equals to
    \[
      \det\Psi(q,\dot q) = 2\dot{\psi}\left(-\dot{\psi}\dot{x}\sin\psi+\dot{\psi}\dot{z}\cos\psi+\sin\psi\right).
    \]
    If a periodic orbit does not lie entirely within the set where $\det\Psi(q, \dot q) = 0$, then this theorem ensures the existence of a feedback law that steers the system to the reference trajectory in finite time.
    This condition is satisfied by the trajectory~(\ref{eq:ref-traj}) for all $t$ except at the endpoints $t = \pm \frac{\pi}{2}$, where $\dot \psi$ vanishes. Notably, the condition also holds at the point $q_0 = q_*(0)$, where the controlled invariance of the manifold $\Gamma$ fails.

  \subsection{Synthesis of Orbitally Stabilizing Feedback}
    While the arguments above confirm the feasibility of trajectory stabilization, practical implementation requires a more constructive synthesis procedure. 
    To this end, we adopt the transverse linearization approach~\cite{Banaszuk-1995,Surov-2020}, wherein the dynamics~(\ref{eq:pvtol-dynamics}) are expressed in new local coordinates
    $\tau \in S^1$ and $\rho \in \mathbb{R}^5$, which are diffeomorphic to $(q, \dot q)$ within a tubular neighborhood of the reference orbit.
    Although various coordinate transformations can serve this purpose (see, for example,~\cite{Surov-2020}), we aim to show that the VHC~(\ref{eq:implicit-vhc-candidate}) is also suitable, even though it does not satisfy Definition~\ref{def:VHC}. We define the new coordinates as
    \begin{align*}
      & \tau:=\mathrm{atan2}\left(x,\dot{x}\right),\quad\rho_{1,2}:=h\left(q\right),\quad\rho_{3,4}:=L_{\dot{q}}h\left(q\right),\\
      & \rho_{5}:=\left(x-x_{*}\left(\tau\right)\right)\sin\tau+\left(\dot{x}-\dot{x}_{*}\left(\tau\right)\right)\cos\tau
    \end{align*}
    along with the new control input 
    \[
      w:=u-u_{*}\left(\tau\right).
    \]
    The variables $\rho\equiv\left(\rho_{1},\dots,\rho_{5}\right)^{\top}$
    serve as transverse coordinates, meaning they vanish when the phase space point lies on the reference orbit.
    The coordinate $\tau \in [-\pi, \pi)$ represents
    projection of the phase space point onto the reference trajectory.
    It is straightforward to verify that the transformation $\left(q,\dot{q}\right)\mapsto\left(\tau,\rho\right)$
    is diffeomorphic within a tubular neighborhood of the orbit. This implies the existence of inverse transformation $\left(q,\dot{q}\right)=\mu\left(\rho,\tau\right)$
    allowing the dynamics~(\ref{eq:pvtol-dynamics}) to be expressed as
    \begin{equation}
    \label{eq:transverse-dynamics-1}
      \frac{\partial\mu\left(\rho,\tau\right)}{\partial\left(\rho,\tau\right)}\left(\begin{array}{c}
      \dot{\rho}\\
      \dot{\tau}
      \end{array}\right)=\left(\begin{array}{c}
      \dot{q}\\
      B\left(q\right)u_{*}\left(\tau\right)-G+B\left(q\right)w
      \end{array}\right).
    \end{equation}
    Next, we substitute the transformation $\left(q,\dot{q}\right)=\mu\left(\rho,\tau\right)$
    into the right-hand side of~(\ref{eq:transverse-dynamics-1})
    and solve for $\dot{\rho}$ and $\dot{\tau}$. Since the transformation is diffeomorphic, the Jacobian matrix is invertible, ensuring the validity of this step. Moreover, we observe that $\dot \tau > 0$ within a tubular neighborhood and bounded input $w$, which justifies -- following~\cite{Banaszuk-1995} -- dividing $\dot{\rho}$ by $\dot{\tau}$ to obtain the transverse dynamics in the form of a general nonlinear system:
    \begin{equation}
    \label{eq:transverse-dynamics-2}
      \frac{d\rho}{d\tau}=f\left(\rho,\tau,w\right).
    \end{equation}
    Linearizing~(\ref{eq:transverse-dynamics-2}) with respect to $\rho$ and $w$ yields:
    \[
      \frac{d\rho}{d\tau}=A\left(\tau\right)\rho+B\left(\tau\right)w+o\left(\tau,\rho,w\right),
    \]
    where the function $o\left(\tau,\rho,w\right)$ collects all bilinear, quadratic and higher-order terms in $\rho$ and $w$; the matrix functions
    \begin{align*}
      A\left(\tau\right) &:= \left(\frac{\partial f\left(\rho,\tau,w\right)}{\partial\rho}\right)_{\rho=0,w=0}\quad\text{and}\quad \\
      B\left(\tau\right) &:= \left(\frac{\partial f\left(\rho,\tau,w\right)}{\partial w}\right)_{\rho=0,w=0}
    \end{align*}
    are periodic with period $2\pi$.
    Based on the arguments in~\cite[Theorem~1]{Yakubovich-1986}, the controllability of the LTV system 
    \begin{equation}
      \frac{d\bar{\rho}}{d\tau}=A\left(\tau\right)\bar{\rho}+B\left(\tau\right)w\label{eq:linearized-transverse-dynamics}
    \end{equation}
    over one period implies the existence of an exponentially stabilizing feedback of the form $w\left(\tau,\bar\rho\right)=K\left(\tau\right)\bar\rho$ with a $2\pi$-periodic matrix of feedback gains $K\left(\tau\right):\mathbb{R}\to\mathbb{R}^{2\times5}$.
    Moreover, as shown in~\cite{Leonov-2006}, the exponential stability of the transversally linearized system guarantees the asymptotic orbital stability of the reference trajectory for the original nonlinear system.
    Following these arguments, we propose the feedback control law
    \begin{equation}
    \label{eq:nonlinear-control}
      u(q,\dot q) = u_{*}\left(\tau(q,\dot q)\right) + K\left(\tau(q,\dot q)\right) \rho(q, \dot q)
    \end{equation}
    to achieve orbital stabilization of the reference trajectory~(\ref{eq:ref-traj}).

  \subsection{Numerical Analysis}
    To validate our claims, we conducted a numerical analysis of the linearized transverse dynamics~(\ref{eq:linearized-transverse-dynamics}).
    The controllability Gramian~\cite[Proposition~5.2]{Kalman-1960-optimal-control}
    over one period has the following eigenvalues:
    \[
      \mathrm{eigvals} \, W\left(0,2\pi\right): \quad \left\{ 744.,\,70.7,\,15.3,\,5.16,\,0.0537\right\},
    \]
    confirming the controllability of the LTV system. The feedback gain matrix $K\left(\tau\right)$ was computed using the LQR~\cite{Gusev-2010}, with weighting matrices $R=I_{2\times2}$ and $Q=I_{5\times5}$. 
    The eigenvalues of the monodromy matrix $F_{2\pi}$ for the closed-loop system are:
    \begin{align*}
      \mathrm{eigvals} \, F_{2\pi}: 
       \left\{ 1.38,\,5.27\pm0.758i,\,0.268\pm5.48i\right\} \times 10^{-3},
    \end{align*}
    all of which lie inside the unit circle, ensuring exponential stability.
    To demonstrate the effectiveness of the proposed control law~(\ref{eq:nonlinear-control}),
    we performed computer simulations of the closed-loop nonlinear system with the initial state $q_{1}=\left(0.1,-0.5,0.0\right)^{\top}$, $\dot{q}_{1}=0_{3}$,
    and an integration time step of $10$ ms. The simulation results are presented in Fig.~\ref{fig:pvtol-sim}. As observed, the system state converges to the desired orbit, and all signals remain smooth and bounded. The behavior of transverse coordinates $\rho$ and control input $u$ is presented in Fig.~\ref{fig:pvtol-transverse}. 

    The analysis demonstrates the feasibility of the trajectory~(\ref{eq:ref-traj}). Based on this result, we conclude that the VHC~(\ref{eq:implicit-vhc-candidate}) is also feasible, as it holds along this trajectory. Moreover, we demonstrate in the following section, there exists a family of trajectories along which the relation~(\ref{eq:implicit-vhc-candidate}) also holds.

    \begin{figure}
      \begin{centering}
        \includegraphics[width=7.5cm]{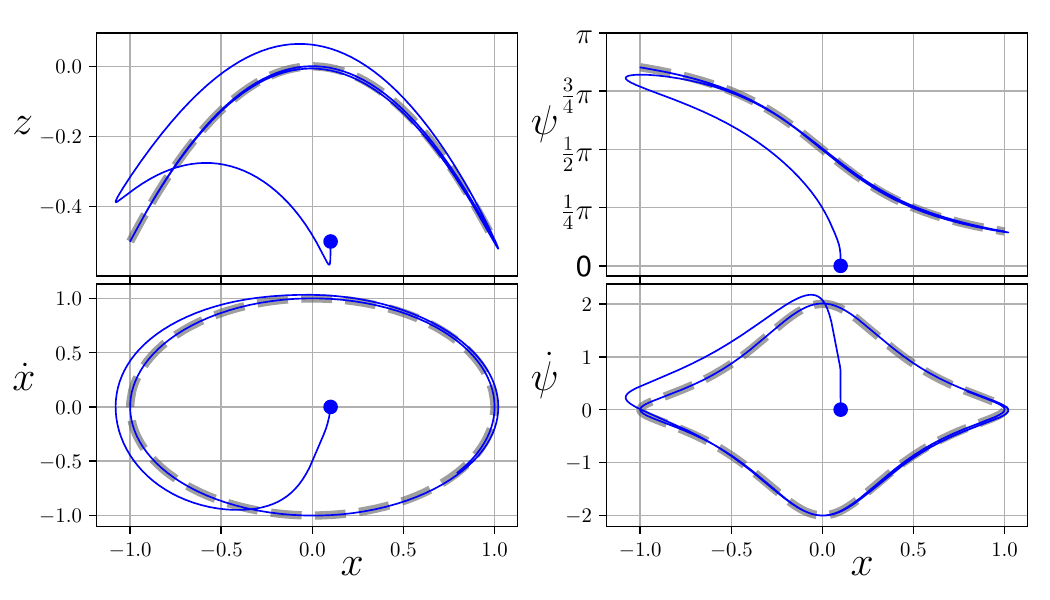}
      \par\end{centering}
      \caption{Closed loop system simulation results.}\label{fig:pvtol-sim}
    \end{figure}
    \begin{figure}
      \begin{centering}
        \vspace{4pt}
        \includegraphics[width=7.5cm]{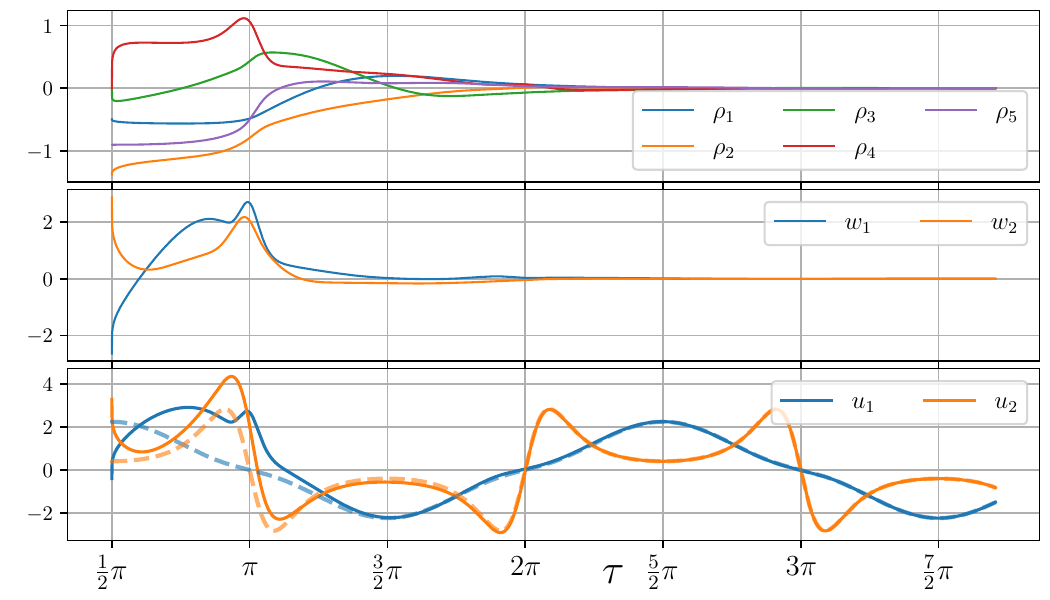}
      \par\end{centering}
      \caption{Evolution of transverse coordinates and control inputs versus trajectory projection $\tau$ in the closed loop system.}\label{fig:pvtol-transverse}
    \end{figure}

  \section{Singular Reduced Dynamics}
  \label{sec:non-regular-vhc}
    We aim to show that the trajectory considered above is not an isolated exception, but rather a representative example within a broad class of admissible solutions of mechanical systems, all of which lie out of the scope of Definition~\ref{def:VHC}. To this end, we briefly recall the method of motion planning based on VHCs~\cite{Shiriaev-2005-constructive-tool,Shiriaev-2006-periodic-planning}.
    As known, according to~\cite[Proposition~2]{Shiriaev-2005-constructive-tool},
    the dynamics of system~(\ref{eq:lag-sys}) under an imposed VHC in parametric form~$q=\phi\left(\theta\right)$
    are governed by the scalar differential equation
    \begin{equation}
    \label{eq:alpha-beta-gamma}
      \alpha\left(\theta\right)\ddot{\theta}+\beta\left(\theta\right)\dot{\theta}^{2}+\gamma\left(\theta\right)=0,
    \end{equation}
    where the scalar coefficients $\alpha\left(\theta\right),\beta\left(\theta\right)$
    and $\gamma\left(\theta\right)$ are defined as
    \begin{align*}
      \alpha\left(\theta\right) & =B_{\perp}\left(\phi\left(\theta\right)\right)M\left(\phi\left(\theta\right)\right)\phi'\left(\theta\right),\\
      \beta\left(\theta\right) & =B_{\perp}\left(\phi\left(\theta\right)\right)M\left(\phi\left(\theta\right)\right)\phi''\left(\theta\right)\\
      & \quad+B_{\perp}\left(\phi\left(\theta\right)\right)C\left(\phi\left(\theta\right),\phi'\left(\theta\right)\right)\phi'\left(\theta\right),\\
      \gamma\left(\theta\right) & =B_{\perp}\left(\phi\left(\theta\right)\right)G\left(\phi\left(\theta\right)\right)
    \end{align*}
    and $B_{\perp}\left(q\right)\in\mathbb{R}^{1\times n}\setminus\left\{ 0\right\} $
    is a left annihilator of $B\left(q\right)$.
    For the PVTOL aircraft considered in the previous section, we had used the VHC
    \begin{equation}
    \label{eq:pvtol-vhc}
      \phi\left(\theta\right)=\left(\theta,-\frac{1}{2}\theta^{2},\frac{\pi}{2}-\arctan2\theta\right)^{\top}
    \end{equation}
    and matrix $B_{\perp}\left(q\right)=\left(\cos\psi,\sin\psi,0\right)$,
    which result in the reduced dynamics:
    \begin{equation}
    \label{eq:pvtol-reduced-dynamics}
      \theta\ddot{\theta}-\dot{\theta}^{2}+1=0.
    \end{equation}
    A reader can verify through direct substitution that the piecewise function
    \begin{equation}
    \label{eq:piecewise-solution-pvtol}
      \theta_{*}\left(t\right)=\begin{cases}
          \theta_{1}\sin\frac{t}{\theta_{1}} & t\in\left[\theta_{1}\pi,0\right)\\
          \theta_{2}\sin\frac{t}{\theta_{2}} & t\in\left[0,\theta_{2}\pi\right]
        \end{cases}
    \end{equation}
    with parameters $\theta_{1}<0<\theta_{2}$, 
    is twice continuously differentiable on the interval $t\in\left[\theta_{1}\pi,\theta_{2}\pi\right]$
    and satisfies equation~(\ref{eq:pvtol-reduced-dynamics}).
    By selecting $\theta_{1}=-1$ and $\theta_{2}=1$,
    we obtain the solution $\theta_{*}\left(t\right)=\sin t$
    and consequently, the trajectory $q_{*}\left(t\right) = \phi(\theta_*(t))$ presented in~(\ref{eq:ref-traj}).
    
    As observed, equation~(\ref{eq:pvtol-reduced-dynamics}) exhibits a singularity
    at $\theta=0$, thereby violating requirements of the Picard-Lindelöf
    existence theorem. However, as shown in~\cite[Theorem~1]{Surov-2018},
    smooth solutions to the reduced dynamics with isolated singularities may still exist, 
    provided that certain additional conditions are satisfied by the coefficients
    $\alpha\left(\theta\right),\beta\left(\theta\right)$
    and $\gamma\left(\theta\right)$.
    We now recall the formulation of this existence
    \begin{thm}
    \label{thm:singular-solutions-existence}
      Let us consider the differential
      equation~(\ref{eq:alpha-beta-gamma}) with the coefficients 
      $\alpha(\theta), \beta(\theta)$ and $\gamma(\theta)$ defined on an open nontrivial interval $\theta\in I\subset\mathbb{R}$.
      Suppose the following conditions hold:
      \begin{itemize}
        \item
          the coefficients $\alpha\left(\theta\right),\beta\left(\theta\right)$
          and $\gamma\left(\theta\right)$ are smooth functions: $\alpha\left(\theta\right)\in C^{3}\left(I\right)$
          and $\beta\left(\theta\right),\gamma\left(\theta\right)\in C^{2}\left(I\right)$;
        \item
          the function $\alpha\left(\theta\right)$ has a unique zero on the
          interval: $\exists!\,\theta_{s}\in I:\alpha\left(\theta_{s}\right)=0$;
        \item
          the following inequalities hold true:
          \[
            \alpha'\left(\theta_{s}\right)>0,\quad\gamma\left(I\right)>0,\quad\frac{\beta\left(\theta_{s}\right)}{\alpha'\left(\theta_{s}\right)}<-\frac{1}{2}.
          \]
      \end{itemize}
      Then, for any real values $\theta_{1},\theta_{2}\in I$, $\theta_{1}<\theta_{s}<\theta_{2}$,
      and $\dot{\theta}_{1}\geq0$, $\dot{\theta}_{2}\geq0$ there exists
      the smooth function $\theta_{*}\left(t\right)\in C^{2}$ defined on
      the interval $\left[t_{1},t_{2}\right]$ satisfying the equation~(\ref{eq:alpha-beta-gamma})
      and the boundary conditions 
      \begin{align}
        & \theta_{*}\left(t_{1}\right)=\theta_{1},\quad\dot{\theta}_{*}\left(t_{1}\right)=\dot{\theta}_{1},\label{eq:piecewise-solution-general}\\
        \exists t_{s}\in\left(t_{1},\infty\right):\quad & \theta_{*}\left(t_{s}\right)=\theta_{s},\quad\dot{\theta}_{*}\left(t_{s}\right)=\sqrt{-\gamma\left(\theta_{s}\right)/\beta\left(\theta_{s}\right)},\nonumber \\
        \exists t_{2}\in\left(t_{s},\infty\right):\quad & \theta_{*}\left(t_{2}\right)=\theta_{2},\quad\dot{\theta}_{*}\left(t_{2}\right)=\dot{\theta}_{2}.\nonumber 
      \end{align}
    \end{thm}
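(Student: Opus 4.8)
The plan is to lower the order of equation~\eqref{eq:alpha-beta-gamma} by regarding the squared velocity $v(\theta)=\dot\theta^{2}$ as a function of the configuration. Since $\ddot\theta=\tfrac12\,dv/d\theta$ along any solution, the reduced dynamics become the scalar \emph{linear} first-order equation
\[
  \frac{dv}{d\theta}=-\frac{2\beta(\theta)}{\alpha(\theta)}\,v-\frac{2\gamma(\theta)}{\alpha(\theta)},
\]
which is regular on each of the two subintervals $(\theta_{1},\theta_{s})$ and $(\theta_{s},\theta_{2})$ because $\alpha$ vanishes only at $\theta_{s}$. I would first solve the two one-sided problems independently: integrate forward from the datum $v(\theta_{1})=\dot\theta_{1}^{2}$ on the left, and backward from $v(\theta_{2})=\dot\theta_{2}^{2}$ on the right. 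Standard linear theory yields a unique solution on each open interval; the task then reduces to showing that these solutions stay positive and join smoothly at $\theta_{s}$.

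Positivity and monotonicity come from a sign inspection. From $\alpha'(\theta_{s})>0$ and the uniqueness of the zero, $\alpha<0$ on $(\theta_{1},\theta_{s})$ and $\alpha>0$ on $(\theta_{s},\theta_{2})$, while $\gamma>0$ throughout $I$. At any point where $v=0$ the right-hand side equals $-2\gamma/\alpha$, which is positive on the left interval and negative on the right; hence $v$ can cross zero only upward on the left and only downward on the right. Combined with $\dot\theta_{1,2}\ge0$, this excludes interior zeros and forces $v>0$ on both open subintervals, so that $\dot\theta=\sqrt{v}>0$ and the motion is strictly increasing in $\theta$.

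The crux of the argument is the behaviour at the singular point. Put $v_{s}=-\gamma(\theta_{s})/\beta(\theta_{s})$, which is positive because $\gamma(\theta_{s})>0$ and $\beta(\theta_{s})=\kappa\,\alpha'(\theta_{s})<0$ with $\kappa:=\beta(\theta_{s})/\alpha'(\theta_{s})<-\tfrac12$. Writing $w=v-v_{s}$ and $\xi=\theta-\theta_{s}$, the equation near $\theta_{s}$ is a perturbed Euler equation whose leading part is $\xi\,w'+2\kappa\,w=O(\xi)$, and whose homogeneous modes scale like $|\xi|^{-2\kappa}$. I expect this step to be the main obstacle, and it is precisely here that the hypotheses are used: the condition $\kappa<-\tfrac12$ gives $-2\kappa>1$, so both the homogeneous modes \emph{and their first derivatives} vanish as $\xi\to0$, while the nonresonance $1+2\kappa\neq0$ makes the particular solution $\sim a\,\xi$ well defined with a slope $a$ fixed by the coefficients at $\theta_{s}$. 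The smoothness assumptions $\alpha\in C^{3}$ and $\beta,\gamma\in C^{2}$ would be spent on controlling the higher-order remainder so that this asymptotic description is rigorous, e.g.\ via a contraction or comparison estimate on $(\theta_{s}-\delta,\theta_{s})$ and its mirror image. The upshot is a \emph{funnel}: every solution on either side converges to the common value $v_{s}$, and $dv/d\theta$ attains the same one-sided limit $a$ from both sides. Hence the two independently constructed branches automatically match to first order, $v$ is $C^{1}$ across $\theta_{s}$, and $\ddot\theta=\tfrac12\,dv/d\theta$ is continuous. Had we only assumed $\kappa<0$ we would obtain continuity of $v$ but not of $v'$, i.e.\ merely a $C^{1}$ trajectory; the strict bound $-\tfrac12$ is exactly what upgrades this to $C^{2}$.

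It remains to recover time and verify the boundary data. Define $t(\theta)=t_{1}+\int_{\theta_{1}}^{\theta} ds/\sqrt{v(s)}$; the integrand is bounded near $\theta_{s}$ because $v_{s}>0$, and is integrable near an endpoint where $v$ vanishes (an inverse-square-root singularity), so $t(\theta)$ is finite and strictly increasing, producing $t_{1}<t_{s}<t_{2}$. Inverting $t(\theta)$ gives $\theta_{*}(t)$ on $[t_{1},t_{2}]$ with $\dot\theta_{*}=\sqrt{v}$, which reproduces the prescribed velocities $\dot\theta_{*}(t_{1})=\dot\theta_{1}$, $\dot\theta_{*}(t_{2})=\dot\theta_{2}$ and the forced value $\dot\theta_{*}(t_{s})=\sqrt{v_{s}}=\sqrt{-\gamma(\theta_{s})/\beta(\theta_{s})}$. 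Continuity of $\ddot\theta=\tfrac12\,dv/d\theta$ on the interior, together with the regular solvability of~\eqref{eq:alpha-beta-gamma} at the endpoints (where $\alpha\neq0$), gives $\theta_{*}\in C^{2}$, completing the construction.
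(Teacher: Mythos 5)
The paper itself contains no proof of this theorem: it is recalled from \cite[Theorem~1]{Surov-2018} (``We now recall the formulation of this existence\dots''), so there is no in-paper argument to compare against line by line. Judged on its own merits, your proposal is a correct strategy and almost certainly mirrors the cited one: the substitution $v(\theta)=\dot\theta^{2}$, $\ddot\theta=\tfrac12\,dv/d\theta$, turning \eqref{eq:alpha-beta-gamma} into the linear equation $\alpha v'+2\beta v+2\gamma=0$, is the standard device in this literature, and your sign bookkeeping is right: $\alpha<0$ on $(\theta_{1},\theta_{s})$ and $\alpha>0$ on $(\theta_{s},\theta_{2})$, so zeros of $v$ can be crossed only upward on the left and only downward on the right, which together with $\dot\theta_{1,2}\ge 0$ excludes interior zeros (on the right because a downward crossing would leave $v$ trapped below zero at $\theta_{2}$). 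The singular-point analysis is also correct in its essentials: with $\kappa=\beta(\theta_{s})/\alpha'(\theta_{s})<-\tfrac12$ the homogeneous modes $|\xi|^{-2\kappa}$ are $o(|\xi|)$ and have derivative $\sim|\xi|^{-2\kappa-1}\to 0$, so every one-sided solution satisfies $v\to v_{s}=-\gamma(\theta_{s})/\beta(\theta_{s})>0$ and $v'\to a$ with the common slope $a=-2\bigl(\beta'(\theta_{s})v_{s}+\gamma'(\theta_{s})\bigr)/\bigl((1+2\kappa)\,\alpha'(\theta_{s})\bigr)$, well defined since $1+2\kappa<0$; this gives $C^{1}$ matching of $v$ across $\theta_{s}$, hence continuity of $\ddot\theta_{*}=\tfrac12 v'$, and the time map $t(\theta)=t_{1}+\int_{\theta_{1}}^{\theta}ds/\sqrt{v(s)}$ has only integrable inverse-square-root singularities at an endpoint where $\dot\theta_{1}$ or $\dot\theta_{2}$ vanishes. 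The one place where you sketch rather than prove is the rigorous remainder control near the regular singular point (existence of the one-sided solutions up to $\theta_{s}$ and validity of the asymptotics); you flag this yourself, and it follows routinely by variation of constants with integrating factor $\exp\bigl(\textstyle\int 2\beta/\alpha\bigr)\sim|\xi|^{2\kappa}$ under the stated $C^{3}/C^{2}$ hypotheses, so I would call it an omitted computation rather than a gap.
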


    \begin{rem}
    \label{rem:periodic-solitions}
      Due to the symmetry of equation~(\ref{eq:alpha-beta-gamma})
      with respect to time inversion $t\mapsto-t$, the function $\theta_{*}\left(-t\right)$
      is also a solution to this equation.
      The reversed in time trajectory
      passes through the phase-space point with $\theta=\theta_{s}$ and $\dot{\theta}=-\sqrt{-\gamma\left(\theta_{s}\right)/\beta\left(\theta_{s}\right)}$.
      If we select $\dot{\theta}_{1}=\dot{\theta}_{2}=0$, then by concatenating the
      two functions $\theta_{*}\left(t\right)$ and $\theta_{*}\left(2t_{2}-t\right)$,
      we obtain a periodic solution with period $2t_{2}-2t_{1}$.
    \end{rem}

    It is easy to see that the reduced dynamics~(\ref{eq:pvtol-reduced-dynamics})
    satisfy conditions of Theorem~\ref{thm:singular-solutions-existence}
    for $I=\mathbb{R}$ and $\theta_{s}=0$. The solution~(\ref{eq:piecewise-solution-pvtol})
    corresponds to (\ref{eq:piecewise-solution-general}) if we take $t_{1}=\frac{\pi\theta_{1}}{2}$,
    $t_{2}=\frac{\pi\theta_{2}}{2}$ and $t_{s}=0$.

    Importantly, the singularity presented in Theorem~\ref{thm:singular-solutions-existence}
    is not removable and cannot be eliminated through a coordinate change or different representations of the VHC. To demonstrate this, consider the following
    \begin{thm}
    \label{thm:singularity-necessity} 
      Let $q_{*}\left(t\right)$ be a smooth trajectory to the system~(\ref{eq:lag-sys}). Suppose that, at some time $t_{s}$, the trajectory satisfies 
      \begin{align}
      \label{eq:singularity-condition}
        & \, \, \dot{q}_{*}\left(t_{s}\right)\ne0\quad\text{and}\nonumber \\
        & \, \, B_{\perp}\left(q_{*}\left(t_{s}\right)\right)M\left(q_{*}\left(t_{s}\right)\right)\dot{q}_{*}\left(t_{s}\right)=0,
      \end{align}
      where $q_{s}\equiv q_{*}\left(t_{s}\right)$ and $B_{\perp}\left(q\right)\in\mathbb{R}^{1\times n}\setminus\left\{ 0\right\}$  is a left annihilator of $B\left(q\right)$. Then, for any smooth parametrization of the form $q_{*}\left(t\right)=\phi\left(\theta_{*}\left(t\right)\right)$, the associated reduced dynamics necessarily satisfy
      \[
        \alpha \left(\theta_{s}\right) = 0, \quad \text{where} \quad
        \theta_{s}\equiv\theta_{*}\left(t_{s}\right).
      \]
      Furthermore, if $G\left(q_{s}\right) \notin \mathrm{Im}\left[B\left(q_{s}\right)\right]$, then 
      \begin{itemize}
        \item the reduced dynamics exhibit a non-removable singularity at $\theta_{s}$;
        \item there exists no function $h\left(q\right)$ satisfying Definition~\ref{def:VHC}, such that $h\left(q_{*}\left(t\right)\right) \equiv 0$.
      \end{itemize}
    \end{thm}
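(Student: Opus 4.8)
The plan is to prove the three assertions in turn, reducing the general statement to the algebraic structure already exploited in Proposition~\ref{proposition:no-vhc-for-pvtol}, while carefully carrying the mass matrix $M$ through the computation (in the PVTOL case $M=I$, which hides the relevant subtlety). \emph{Step 1 (vanishing of $\alpha$).} First I would differentiate the parametrization identity $q_*(t)=\phi(\theta_*(t))$ to obtain $\dot q_*(t)=\phi'(\theta_*(t))\,\dot\theta_*(t)$. Since $\dot q_*(t_s)\neq0$, necessarily $\dot\theta_*(t_s)\neq0$, so $\phi'(\theta_s)=\dot q_*(t_s)/\dot\theta_*(t_s)$ is a nonzero scalar multiple of the velocity. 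Substituting into $\alpha(\theta_s)=B_\perp(q_s)M(q_s)\phi'(\theta_s)$ and invoking hypothesis~\eqref{eq:singularity-condition}, namely $B_\perp(q_s)M(q_s)\dot q_*(t_s)=0$, gives $\alpha(\theta_s)=0$ at once. This step is routine.

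\emph{Step 2 (non-removable singularity).} I would next evaluate $\gamma(\theta_s)=B_\perp(q_s)G(q_s)$. Since $B_\perp(q_s)$ spans the one-dimensional left null space of $B(q_s)$, one has $\mathrm{Im}\left[B(q_s)\right]=\ker B_\perp(q_s)$, so the assumption $G(q_s)\notin\mathrm{Im}\left[B(q_s)\right]$ is equivalent to $\gamma(\theta_s)=B_\perp(q_s)G(q_s)\neq0$. With $\alpha(\theta_s)=0$ and $\gamma(\theta_s)\neq0$, equation~\eqref{eq:alpha-beta-gamma} degenerates at $\theta_s$ into the algebraic relation $\beta(\theta_s)\dot\theta^{2}+\gamma(\theta_s)=0$ and cannot be solved for $\ddot\theta$; this is the singularity. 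To establish non-removability I would observe that both conditions are parametrization-invariant: under any regular reparametrization $\theta=\sigma(\vartheta)$ the coefficient transforms as $\tilde\alpha(\vartheta)=\alpha(\sigma(\vartheta))\,\sigma'(\vartheta)$ with $\sigma'\neq0$, so the zero of $\alpha$ at the parameter value mapping to $q_s$ is preserved, while $\gamma$ evaluated at the fixed point $q_s$ does not depend on the parametrization at all. Hence no smooth parametrization of the same curve can eliminate the singularity.

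\emph{Step 3 (no $h$ satisfying Definition~\ref{def:VHC}).} I would argue by contradiction as in Proposition~\ref{proposition:no-vhc-for-pvtol}, but keeping $M\neq I$. Solving~\eqref{eq:lag-sys} for $\ddot q=M^{-1}(Bu-C\dot q-G)$ and inserting into $\ddot y=dh\,\ddot q+L_{\dot q}^{2}h$ shows that controlled invariance of $\Gamma$ requires, at every point of $\Gamma$, solvability in $u$ of
\[
  dh(q)M^{-1}(q)B(q)\,u = dh(q)M^{-1}(q)\bigl(C(q,\dot q)\dot q+G(q)\bigr)-L_{\dot q}^{2}h(q).
\]
The governing matrix is therefore $dh\,M^{-1}B$ (not $dh\,B$), and I would show it is singular at $q_s$ precisely because $\alpha(\theta_s)=0$: the tangent $\phi'(\theta_s)$ spans $\ker dh(q_s)$, and $\alpha(\theta_s)=0$ yields $M(q_s)\phi'(\theta_s)\in\mathrm{Im}\left[B(q_s)\right]$, i.e.\ $\phi'(\theta_s)\in\mathrm{Im}\left[M^{-1}(q_s)B(q_s)\right]$, so a nonzero $v$ with $M^{-1}Bv=\phi'$ lies in $\ker(dh\,M^{-1}B)$. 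Taking a nonzero left annihilator $N$ of $dh(q_s)M^{-1}(q_s)B(q_s)$, the row vector $N\,dh(q_s)M^{-1}(q_s)$ annihilates $B(q_s)$ and is nonzero (because $dh$ has full row rank and $M^{-1}$ is invertible), hence equals $\lambda B_\perp(q_s)$ with $\lambda\neq0$. Evaluating the solvability condition at the admissible point $(q_s,0)\in\Gamma$ annihilates every $\dot q$-dependent term and leaves $N\,dh(q_s)M^{-1}(q_s)G(q_s)=\lambda\gamma(\theta_s)=\lambda B_\perp(q_s)G(q_s)\neq0$, so the system is unsolvable there, contradicting controlled invariance.

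The main obstacle is Step 3: correctly identifying $dh\,M^{-1}B$ as the governing matrix and proving the equivalence between $\alpha(\theta_s)=0$ and the singularity of $dh(q_s)M^{-1}(q_s)B(q_s)$; once this is in place, the evaluation at $(q_s,0)$ and the use of $G(q_s)\notin\mathrm{Im}\left[B(q_s)\right]$ mirror the PVTOL computation exactly. I would take care to verify the one-dimensionality of both $\ker dh(q_s)$ and the left null space of $B(q_s)$, since these underlie the spanning claim for $\phi'$ and the proportionality $N\,dh\,M^{-1}=\lambda B_\perp$.
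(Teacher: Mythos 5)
Your proposal is correct and follows essentially the same route as the paper's proof: the same substitution argument for $\alpha(\theta_s)=0$, the same identification of $\gamma(\theta_s)=B_\perp(q_s)G(q_s)\neq0$ as the source of the non-removable singularity, and the same contradiction via a left annihilator $N$ of the governing matrix $dh(q_s)M^{-1}(q_s)B(q_s)$, with $N\,dh\,M^{-1}$ proportional to $B_\perp(q_s)$ forcing $G(q_s)\in\mathrm{Im}\left[B(q_s)\right]$. Your evaluation at $(q_s,0)\in\Gamma$ is just a streamlined version of the paper's separation of the $\dot q$-independent and bilinear terms, and your explicit verification that $dh\,M^{-1}B$ is singular spells out a step the paper only asserts.
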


    \begin{proof}
      We substitute the expressions 
      \[
        q_{*}\left(t_{s}\right)=\phi\left(\theta_{*}\left(t_{s}\right)\right) \quad\text{and}\quad\dot{q}_{*}\left(t_{s}\right)=\phi'\left(\theta_{*}\left(t_{s}\right)\right)\dot{\theta}_{*}\left(t_{s}\right)
      \]
      into condition~(\ref{eq:singularity-condition}). This yields 
      \[
        \underbrace{B_{\perp}\left(\phi\left(\theta_{*}\left(t_{s}\right)\right)\right)M\left(\phi\left(\theta_{*}\left(t_{s}\right)\right)\right)\phi'\left(\theta_{*}\left(t_{s}\right)\right)}_{=\alpha\left(\theta_{s}\right)}\dot{\theta}_{*}\left(t_{s}\right)=0.
      \]
      Assuming that $\phi\left(\theta\right)$ is smooth and $\phi'\left(\theta_{*}\left(t_{s}\right)\right)\dot{\theta}_{*}\left(t_{s}\right)\ne0$,
      it follows that $\dot{\theta}_{*}\left(t_{s}\right)\ne0$ and therefore
      $\alpha\left(\theta_{s}\right)=0$. Now, since $G\left(q_{s}\right) \notin \mathrm{Im}\left[B\left(q_{s}\right)\right]$,
      we conclude that $\gamma\left(\theta_{s}\right) = B_\perp(q_s) G(q_s) \ne 0$. This implies
      that the right hand side of the equation 
      \[
        \ddot{\theta}=-\frac{\beta\left(\theta\right)}{\alpha\left(\theta\right)}\dot{\theta}^{2}-\frac{\gamma\left(\theta\right)}{\alpha\left(\theta\right)}
      \]
      is not Lipschitz continuous in a neighborhood of $\theta_{s}$. Hence
      the singularity at $\theta_{s}$ cannot be eliminated.

      Assume, for the sake of contradiction, that there exists a function $h(q)$ satisfying Definition~\ref{def:VHC}. Then the associated manifold
      \[
        \Gamma\equiv\left\{ \left(q,\dot{q}\right)\in\mathbb{R}^{2n}\mid h\left(q\right)=0\quad \text{and} \quad L_{\dot{q}}h\left(q\right)=0\right\}
      \]
      is controlled invariant. This implies that the following system of linear algebraic equations in $u$ must be solvable:
      \begin{align*}
        & dh\left(q\right)M^{-1}\left(q\right)\left(B\left(q\right)u-C\left(q,\dot{q}\right)\dot{q}-G\left(q\right)\right)+L_{\dot{q}}^{2}h\equiv0,
      \end{align*}
      at every point of $\Gamma$, in particular for all $\dot q \in T_{q_s} \mathcal{C}$. 
      At the point $q_s$ the matrix 
      \begin{equation}
      \label{eq:controlled-invariance-condition}
        dh\left(q_{s}\right)M^{-1}\left(q_{s}\right)B\left(q_{s}\right) \in \mathbb{R}^{(n-1)\times(n-1)}
      \end{equation}
      becomes singular and hence admits a nontrivial left annihilator $N\in\mathbb{R}^{1\times(n - 1)}\setminus\left\{ 0\right\}$. This fact follows from the properties:
      $B_{\perp}\left(q_{s}\right)M\left(q_{s}\right)\dot{q}_{s}=0$, $\dot{q}_{s}\ne0$ and $L_{\dot{q}} h = 0$. Since the expression~(\ref{eq:controlled-invariance-condition}) contains terms which are bilinear in $\dot{q}$ and terms which do not depend on $\dot{q}$, the only way for the system to remain solvable is if the following two conditions hold:
      \begin{align}
        N dh\left(q_{s}\right)M^{-1}\left(q_{s}\right)G\left(q_{s}\right) &= 0 \label{eq:g_projection_is_zero} \\
        NL_{\dot{q}}^{2}h\left(q_{s}\right)-Ndh\left(q_{s}\right)M^{-1}\left(q_{s}\right)C\left(q_{s},\dot{q}\right)\dot{q} &= 0 \label{eq:force_balance}
      \end{align}
      for all $\dot{q}\in T_{q_{s}}\mathcal{C}$. Here, the row vector $N dh\left(q_{s}\right)M^{-1}\left(q_{s}\right) \in \mathbb{R}^{1\times n}$ is nonzero. Its orthogonal complement coincides with the $(n-1)$-dimensional linear space $\mathrm{Im}\left[B\left(q_{s}\right)\right]$. Consequently, the condition~(\ref{eq:g_projection_is_zero}) implies that $G(q_s) \in \mathrm{Im}\left[B\left(q_{s}\right)\right]$, which directly contradicts the assumption of Theorem~\ref{thm:singularity-necessity}. This contradiction immediately implies that there is no function $h(q)$ which defines a VHC for the considering trajectory and satisfies Definition~\ref{def:VHC}.
    \end{proof}

    Theorem~\ref{thm:singularity-necessity} establishes that Definition~\ref{def:VHC} cannot be satisfied for a broad class of trajectories, thereby precluding their generation via the VHC approach.
    Specifically, consider a candidate VHC of the form $q = \phi(\theta)$, and suppose that a corresponding solution $\theta_*(t)$ to the reduced dynamics is obtained using Theorem~\ref{thm:singular-solutions-existence}. Then the resulting trajectory $q_*(t) = \phi(\theta_*(t))$ necessarily satisfies the conditions of Theorem~\ref{thm:singularity-necessity}, implying that no valid VHC exists in the sense of Definition~\ref{def:VHC}.
    However, there are no fundamental reasons to disregard trajectories like $q_*(t)$ or to consider the associated constraint $\phi(\theta)$ invalid, as the trajectory satisfies all typical requirements of the motion planning problem, as previously shown. 
    In this context, Definition~\ref{def:VHC} plays a restrictive role by excluding such trajectories from further consideration. Consequently, the definition becomes counterproductive in the context of motion planning. In the following, we illustrate -- using the example of the PVTOL system -- a family of practically relevant trajectories, yet are excluded by the definition.

  \section{Example: PVTOL Aircraft Trajectories}
  \label{sec:pvtol-periodic-trajectories}
    In the previous section, we demonstrated that the VHC~\eqref{eq:pvtol-vhc} leads to reduced dynamics with an isolated singularity at $\theta_s = 0$, corresponding to the configuration $x_s = 0, z_s = 0$ and $\psi_s = \frac{\pi}{2}$. Notably, it is possible to define a VHC that imposes a singularity at an arbitrary configuration $q_{s} \equiv \left(x_{s}, z_{s}, \psi_{s}\right)^{\top}$ as follows:
    \begin{equation}
    \label{eq:pvtol-vhc-poly}
      \phi\left(\theta\right)=q_{s}+B\left(q_{s}\right)\left(\begin{array}{c}
        k_{1}\\
        k_{2}
      \end{array}\right)\theta+\frac{k_{3}}{2}B_{\perp}^{\top}\left(q_{s}\right)\theta^{2},
    \end{equation}
    where $k_{1}, k_{2}, k_{3} \in \mathbb{R}$ are constant parameters. The corresponding reduced dynamics take the form
    \begin{align}
    \label{eq:alpha-beta-gamma-poly-vhc}
      & \alpha\left(\theta\right)\ddot{\theta}+\beta\left(\theta\right)\dot{\theta}^{2}+\gamma\left(\theta\right)=0,\quad\text{with}\\
      & \alpha\left(\theta\right)=k_{1}\sin\left(k_{2}\theta\right)+k_{3}\theta\cos\left(k_{2}\theta\right), \nonumber  \\
      & \beta\left(\theta\right)=k_{3}\cos\left(k_{2}\theta\right),\quad\gamma\left(\theta\right)=\sin\left(\psi_{s}+k_{2}\theta\right)\nonumber 
    \end{align}
    and exhibit a singularity at $\theta_s = 0$, which corresponds to the configuration $q_s = \phi(0)$. If, for a given $q_s$, one can choose parameters $k_1,k_2,k_3$ such that equation~(\ref{eq:alpha-beta-gamma-poly-vhc}) satisfies the conditions of Theorem~\ref{thm:singular-solutions-existence}, then a family of periodic trajectories exists in a neighborhood of $q_s$.
    A straightforward analysis shows that these conditions are solvable for any $x_s \in \mathbb{R}$, $y_s \in \mathbb{R}$, and $\psi_s \in (0, \pi) \cup (\pi, 2\pi)$ and hence existence of periodic trajectories near these configurations.
    Taking into account the discussion in Section~\ref{sec:pendubot-trajectory-stabilizability}, we conclude that most of these trajectories are orbitally stabilizable.

    We performed a numerical analysis of two trajectories from this set corresponding to small oscillations near $\psi_s = \frac{\pi}{4}$ and $\psi_s = \frac{\pi}{2}$. Simulation results of feedback-controlled periodic motions for them are available at~\url{https://youtube.com/shorts/EM2qfht1D74}.
    It is important to emphasize that, for all of these trajectories, no function $h(q)$ satisfying Definition~\ref{def:VHC} exists, thereby highlighting the limitations of the definition.

  \section{Discussion on VHC Definitions}
  \label{sec:VHC-definitions}
    To the best of our knowledge, Definition~\ref{def:VHC} was first formally introduced in~\cite{Jankuloski-2012}, although the requirement for the controlled invariance of $\Gamma$ appeared in earlier works such as~\cite{Consolini-2010,Consolini-2011}. 
    This definition represents a shift toward a more narrow interpretation of VHCs
    compared to earlier studies, for example, the broader formulation presented in~\cite[p.~204]{Shiriaev-2008}. In that work, the concept of a VHC is based on the existence of a feedback control law $u$ that ensures the invariance of the relation $q = \phi(\theta)$ along at least one solution $q_*(t)$ of the system~(\ref{eq:lag-sys}).
    A similar interpretation is found in~\cite[pp. 1165-1167]{Shiriaev-2005-constructive-tool}, where the authors require the existence of a static feedback control such that the relation $q = \phi(\theta)$ is invariant along solutions of the closed-loop system. This broader viewpoint is consistent with the results established in Theorems~\ref{thm:singular-solutions-existence} and~\ref{thm:singularity-necessity}.

    By narrowing the interpretation of VHCs, the authors of~\cite{Jankuloski-2012} did not provide a formal justification for excluding the earlier, broader formulations. Moreover, they did not address whether the conditions introduced in Definition~\ref{def:VHC} are necessary. Subsequent works adopting this definition \cite{Maggiore-2013,Consolini-2018,Otsason-2019} have not clarified this issue either. 

    In our view, the definition provided in~\cite{Shiriaev-2008} is sufficiently general and well-suited for most trajectory planning tasks, with the possible exception of cases involving non-smooth virtual constraints. Meanwhile, the more general formulation presented in~\cite{Celikovsky-2015,Celikovsky-2017} requires only the linear independence of the differentials $dh_i(q)$, for $i = 1, \dots, n-1$, and does not address the feasibility of the constraint relation $h(q) = 0$. This approach is likewise appropriate in the context of motion planning.

  \section{Concluding Remarks}
  \label{sec:conclusion}
    This work has critically examined the commonly used definition of VHCs based on controlled invariance. We have shown that this definition represents a significant narrowing of the broader conceptual framework introduced in earlier works on VHCs.
    In particular, we have formally proven that the controlled invariance of the manifold $\Gamma$ must violate for a broad class of feasible trajectories. 
    Consequently, the feasibility of a VHC is not solely determined by the existence of a controlled invariant manifold and, therefore, should not be treated as a fundamental criterion in the general definition of VHCs.

    Moreover, the requirement of controlled invariance is at odds with motion planning techniques presented in~\cite{Surov-2018,Surov-2025}. In this context, narrowing the concept of VHCs plays a counterproductive role, as it restricts the development of more general and practically useful motion planning approaches.

    It would be more appropriate to regard Definition~\ref{def:VHC} as describing a subclass of VHCs in which the controlled invariance of $\Gamma$ is intrinsic. By narrowing the general interpretation of VHCs, the authors introduced terminology that became misleading in the broader context of motion planning, potentially hindering future developments in the field.

\end{document}